\newcommand{\mbb}{\mathbf{b}}
\newcommand{\mbf}{\mathbf{f}}
\newcommand{\mbg}{\mathbf{g}}
\newcommand{\mbx}{\mathbf{x}}
\newcommand{\mby}{\mathbf{y}}
\newcommand{\mbD}{\mathbf{D}}
\newcommand{\R}{\mathbb{R}}
\newcommand{\calC}{\mathcal{C}}
\newcommand{\calD}{\mathcal{D}}
\newcommand{\calE}{\mathcal{E}}
\newcommand{\calK}{\mathcal{K}}
\newcommand{\calR}{\mathcal{R}}
\newcommand{\calT}{\mathcal{T}}
\newcommand{\calX}{\mathcal{X}}
\newcommand{\inner}[1]{\left\langle#1\right\rangle}
\def\R{\mathbb{R}}
\def\calX{\mathcal{X}}
\def\det{\mathop{\rm det}\nolimits}
\def\argmax{\mathop{\rm arg\,max}\limits}
\def\ones{\mathbf{1}}
\newtheorem{proposition}{Proposition}
\newcommand{\ignore}[1]{}
\DeclareRobustCommand\onedot{\futurelet\@let@token\@onedot}
\def\@onedot{\ifx\@let@token.\else.\null\fi\xspace}
\def\eg{{e.g}\onedot} 
\def\ie{{i.e}\onedot} 
\def\cf{{cf}\onedot}
\def\etal{\emph{et al}\onedot}
\def\expandafter\normalsize\expandafter{%
\normalsize\setlength\abovedisplayskip{4pt}}
\def\expandafter\normalsize\expandafter{%
\normalsize\setlength\belowdisplayskip{4pt}}
\newcommand{\ReRankOne}[1]{\color{BlueViolet}\textbf{{#1}}}
\newcommand{\ReRankTwo}[1]{\color{PineGreen}{#1}}
\newcommand{\ReRankThree}[1]{\color{RedOrange}{#1}}
\ificcvfinal\pagestyle{plain}\fi
\begin{document}
\title{Context-guided diffusion for label propagation on graphs}

\author{Kwang In Kim\\
\normalsize{Lancaster University}\\
\and
James Tompkin\\
\normalsize{Harvard Paulson SEAS}\\
\and
Hanspeter Pfister\\
\normalsize{Harvard Paulson SEAS}\\
\and
Christian Theobalt\\
\normalsize{MPI for Informatics}
}

\maketitle

\begin{abstract} 
Existing approaches for diffusion on graphs, \eg, for label propagation, are mainly focused on isotropic diffusion, which is induced by the commonly-used graph Laplacian regularizer. Inspired by the success of diffusivity tensors for anisotropic diffusion in image processing, we presents anisotropic diffusion on graphs and the corresponding label propagation algorithm. We develop positive definite diffusivity operators on the vector bundles of Riemannian manifolds, and discretize them to diffusivity operators on graphs. This enables us to easily define new robust diffusivity operators which significantly improve semi-supervised learning performance over existing diffusion algorithms.
\end{abstract}

\thispagestyle{fancy}

\section{Introduction}
Physical diffusion describes how energy, mass, or substances spread over time --- how their densities \emph{smoothen out} in a medium. Simulating physical diffusion on a Euclidean space, a manifold, or their discrete approximations, \eg, grids or graphs, has application in image processing, computer vision, and machine learning. For instance, diffusion is now a standard tool for removing noise or to highlight salient structures~\cite{Wei98}. The graph Laplacian, as a discrete approximation of the generator of the diffusion process on manifolds, i.e., the Laplace-Beltrami operator, is commonly used in spectral clustering and semi-supervised learning, which finds applications in object recognition~\cite{EbeFriSch12-1,WuYuWan13}, image retrieval~\cite{ElbWerHel13}, and segmentation and matting~\cite{CasNonTau14,LiCheTan13}. Similarly, stochastic diffusion process on graphs find application in multi-label classification~\cite{WanTuTso13} and image retrieval~\cite{GopTurChe10}.

In these applications, typically we are given a set of objects $X=\{\mbx_1,\ldots,\mbx_n\}$ and corresponding assignments of variables $Y^t=\{\mby^t_1,\ldots,\mby^t_n\}$ at time $t=0$. Then, (simulated) diffusion models how $Y$ \emph{smooths} over $X$. For instance, when $X$ denotes vertices of a mesh, $Y$ is the coordinate representations of $X$ in an embedding space $\calX$, leading to mesh fairing. More generally, if $X$ denotes noisy observations of data points lying on a manifold, diffusion leads to manifold denoising. If $Y$ represents class labels of data points in $X$, diffusion leads to label propagation and facilitates semi-supervised learning. In this case, $Y$ is assumed to be a sample from an underlying classification function $f$ on $\calX$ (\ie, $Y=\{\mby_1,\ldots,\mby_n\}=\{f(\mbx_1),\ldots,f(\mbx_n)\}$). 

Diffusion is determined by the initial condition $Y^0$ and the \emph{diffusivity} defined on $X$ or $\calX$. Roughly, the diffusivity describes the direction and strength of $f$ (and equivalently $Y$) being smoothed at each time instance $t$. In general, the diffusivity is inhomogeneous as it varies over $X$, and is anisotropic as its strength varies over different directions at each point $\mbx\in X$. For instance, in image processing, diffusivity is strong in flat regions but weaker on edges. Further, on an edge, diffusivity is stronger along the direction of edges than across it. This leads to edge-preserving image smoothing as pioneered by Weickert~\cite{Wei98}. 

For graph data, diffusion can be seen as label propagation in semi-supervised learning. Thus far, label propagation has mainly focused on isotropic diffusion (\ie, the diffusivity is fixed on the entire data space and all directions at each point therein), and only recently has anisotropic diffusion been explored: Coifman and Lafon~\cite{CofLaf06} apply anisotropic diffusion to the graph-based dimensionality reduction problem. They control diffusivity by normalizing the (originally isotropic) pair-wise similarity with the evaluations of diffused coordinate values. Szlam~\etal~\cite{SzlMagCoi08} generalizes and extends this framework to semi-supervised learning by controlling diffusivity via evaluations of class labels $f$: If $f(\mbx_i)$ and $f(\mbx_j)$ are similar, \ie, if the class labels of $\mbx_i$ and $\mbx_j$ are likely to be the same, then diffusivity along the edge joining them is high. Otherwise, diffusivity becomes low, which prevents label propagation across class boundaries. This leads to significant performance improvement over classical isotropic diffusion. Kim~\etal~\cite{KimTomThe13} proposed adapting diffusivity on Riemannian manifolds based on local curvature estimates: Diffusivity is strong in flat regions and weak along the direction of the curvature operator, which leads to an awareness of intersections between manifolds and so improves performance over isotropic equivalents. However, this requires the data $X$ to be embedded in an ambient Euclidean space, and so does not apply to inference on general graphs. 

We propose two contributions for anisotropic diffusion on graphs. First, we analyze continuous anisotropic diffusion processes on smooth manifolds, and show that anisotropic diffusion is nothing more than isotropic diffusion on a manifold with a new metric. Based on this analysis, we arrive at a new anisotropic graph Laplacian approach which is similar to the stochastic kernel smoothing approach of Szlam~\etal~\cite{SzlMagCoi08}, but with a new geometric intuition. This provides explicit criteria to define valid diffusivities on graphs and manifolds, and it facilitates non-linear diffusion on graphs. Second, we explore two possible operators which control diffusivity of each edge based on local neighborhood contexts and not just their end vertices. This \emph{context-guided diffusion} extends to graphs the robust diffusion algorithm originally developed for image enhancement~\cite{Wei98}, and we demonstrate on 11 different classification problems that this improves semi-supervised learning performance over isotropic diffusion, the stochastic anisotropic diffusion of Szlam~\etal~\cite{SzlMagCoi08}, and three existing label propagation algorithms~\cite{ZhuGhaLaf03,GonTao14,WanZha06}.

To assist readers and subsequent development, we make our code available on the web.

\section{Anisotropic diffusion on graphs}
\label{s:anidiffgraph}
We develop anisotropic analogs to the existing isotropic diffusion process and to the corresponding graph Laplacian. We also introduce context-guided diffusion for semi-supervised learning. These contributions are based on the analysis of the continuous positive definite diffusivity operators on Riemannian manifolds, which we leave for Sec.~\ref{s:continuouscase}.

Existing works \cite{ZhoSch06,HeiAudLix05} establish the (isotropic) graph Laplacian as a discrete approximation of the Laplace-Beltrami operator on a data manifold. We build upon these works to develop isotropic and anisotropic graph Laplacians by combining local diffusivity operators defined on sub-graphs centered at each data point. As such, first, we explain existing approaches.

\paragraph{Discrete isotropic diffusion.}
A weighted graph $(X,E,W)$ consists of sets of nodes $X$ of size $n$
, edges $E\subset X\times X$, and non-negative similarities $w_{ij}:=w(e_{ij})\in W$ for each edge $e_{ij}\in E$, with $w_{ij}=0$ if $e_{ij}\notin E$.

For subsequent definition of diffusivity operators based on local gradients and divergences, we need spaces with defined inner products (\ie, Hilbert spaces), and so we introduce spaces $H(X)$ and $H(E)$ of functions on $X$ and $E$, with inner products defined as~\cite{ZhoSch06,HeiAudLix05}:
\begin{align}
\inner{f,h}_{H(X)} &= \sum_{i=1}^nf(i)h(i) d_i, \forall f, g \in H(X),\\
\inner{S,T}_{H(E)} &= \sum_{i,j=1}^n S(i,j)T(i,j), \forall S, T \in H(E),
\end{align}
where $f(i)=f(\mbx_i)$ and $d_i$ is the degree of node $\mbx_i\in X$:
\begin{align}
\label{e:nodedegree}
d_i = \sum_{j=1}^n w_{ij}.
\end{align}

For each node $\mbx_i$, a subgraph $G_i=(X_i,E_i,W_i)$ centered at $\mbx_i$ is defined as the set of nodes that are connected to $\mbx_i$ and the corresponding edges, \ie, $X_i=\{\mbx_j| e_{ij}\in E\}$, $E_i=\{e_{ij}|\mbx_j\in X_i\}$, and $W_i$ are obtained by evaluating $W$ at $E_i$. 
 The inner-product structures on $X_i$ and $E_i$ are induced as restrictions of the corresponding structures on the entire graph $G$ to the sub-graph $G_i$, which we denote by $H(X_i)$ and $H(E_i)$, respectively. Given these structures, we define discrete gradient and divergence operators at $G_i$. First, the graph gradient operator $\nabla_i:H(X_i)\to H(E_i)$ is defined as the collection of $f$ differences along the edges:
\begin{align}
[\nabla_i f](e_{ij}) = \sqrt{w_{ij}}(f(j)-f(i)),
\label{e:graddef}
\end{align}
for $e_{ij}\in E_i$ and $f\in H(X_i)$. The graph divergence operator $\nabla_i^*:H(E_i)\to H(X_i)$ is defined as the formal adjoint of $\nabla_i$: for all $f \in H(X_i), S\in H(E_i)$:
\begin{align}
\inner{\nabla_i f,S}_{H(E_i)} = \inner{f,\nabla_i^*S}_{H(X_i)}.
\label{e:divdef}
\end{align}

By substituting Eq.~\ref{e:graddef} into Eq.~\ref{e:divdef}, $\nabla_i^*$ is explicitly given as
\begin{align}
[\nabla_i^*S](i) = \frac{1}{2d_i}\sum_{j=1}^n\sqrt{w_{ji}}(S(j,i)-S(i,j)).
\end{align}

By combining the local gradient and divergence operators, we can construct the global normalized graph Laplacian $L:H(X)\to H(X)$:
\begin{align}
	[Lf](i)=\nabla_i^*\nabla_i f, \text{ }\forall f \in H(X), i=1,\ldots,n.
	\label{e:graphlaplocal}
\end{align}
Our definition of the graph Laplacian is consistent with~\cite{ZhoSch06,HeiAudLix05}. In particular, at the $i$-th node, it is explicitly given as:
\begin{align}
[Lf](i) & = f(i)-\frac{1}{d_i}\sum_{j=1}^n w_{ji}f(j).
\end{align}
If the nodes $X$ of $G$ are sampled from an underlying data generating manifold $M$, \ie, the probability distribution $P(\mbx)$ is supported in $M$, the graph Laplacian $L$ converges to the Laplace-Beltrami operator $\Delta$ on $M$ as $n\to\infty$~\cite{HeiAudLix05,BelNiy05}. This is often regarded as the reason for using graph Laplacian as a regularizer in many applications: The semi-norm $\|f\|_\Delta$ induced by $\Delta$ is equivalent to the norm of the gradient $\nabla f$ of a function $f$ on $M$ (see Sec.~\ref{s:continuouscase}). Then, $Lf$ is obtained as a discrete approximation of the first-order regularizer on graphs. Further, $\Delta$ is the generator of isotropic diffusion process on $M$ and accordingly, $L$ is also a discrete approximation of the isotropic diffusion generator on $G$.

\paragraph{Anisotropic diffusion on graphs.}
Next, we extend isotropic graph Laplacian $L$ to be anisotropic. Our derivation is based on Weickert's definition on positive definite (PD) diffusivity operators on $\R^2$~\cite{Wei98}. In Section~\ref{s:continuouscase}, we introduce an extension of these operators to general Riemannian manifolds and, based on that, establish a rigorous connection between our anisotropic diffusion process on $G$ and that of the data generating manifold $M$.

First, we formally introduce the local diffusivity operator $D_i:H(E_i)\to H(E_i)$:
\begin{align}
D_i &:= \sum_{j\neq i,\mbx_j\in X_i}q_{ij}\mbb_{ij}\otimes \mbb_{ij}\nonumber\\
\Leftrightarrow [D_i S](e_{ij}) &= q_{ij}\mbb_{ij}\inner{\mbb_{ij},S}, \forall S \in H(E_i),
\label{e:discdiffusivity}
\end{align}
where $\otimes$ is the tensor product and the \emph{basis function} $\mbb_{ij}$ is defined as the indicator of $e_{ij}$, \ie, $\mbb_{ij}=\ones_{ij}$. Similar to the construction of diffusivity operators on $\R^2$~\cite{Wei98}, our diffusivity operators are constructed based on its spectral decomposition: $q_{ij}$ is an eigenvalue of the operator $D_i$ corresponding to the eigenfunction $\mbb_{ij}$. This enables us to straightforwardly define a globally PD diffusivity operator on $G$: Our global diffusivity operator $D:H(E)\to H(E)$ is obtained by identifying $D_i$ as the restriction of $D$ on $H(E_i)$. In this case, $D$ is positive definite if and only if $\{q_{ij}\}$ is symmetric and positive, \ie, $q_{jk}=q_{kj},q_{jk}> 0,\forall j,k=1,\ldots,n$. Furthermore, $D$ is \emph{uniformly PD} if all eigenvalues $\{q_{ij}\}$ are lower-bounded by a positive constant $\nu$. 

Now we are ready to define an anisotropic diffusion process on $G$. We construct an anisotropic graph Laplacian:
\begin{align}
[L^D f](i) &:= [\nabla^*_i D_i \nabla_i f](i),\nonumber\\
&=\left(\frac{1}{d_i}\sum_{j=1}^n w_{ij}q_{ij}\right)f(i)-\frac{1}{d_i}\sum_{j=1}^n w_{ij}q_{ij}f(j),
\label{e:aniglap}
\end{align}
where the equality in the second line is obtained by substituting Eqs.~\ref{e:graddef}, \ref{e:divdef}, and \ref{e:discdiffusivity} into the first line.

Except for the normalization term in $f(i)$, the construction of $L^D$ is identical to the isotropic graph Laplacian $L$ case: The original weights $\{w_{ij}\}$ are replaced by new weights $\{w^D_{ij}\}$:
\begin{align}
\label{e:aniw}
	w^D_{ij} = w_{ij}q_{ij}.
\end{align}

Given the anisotropic graph Laplacian $L_D$, we can define the corresponding anisotropic diffusion process on $G$. For instance, for label propagation applications, we propose using the explicit Euler approximation (\cf~Eq.~\ref{e:anilap} for the continuous counterpart):
\begin{align}
\frac{f^{t+1}-f^{t}}{\delta} &= -L^D f^{t}\nonumber\\
\Leftrightarrow f^{t+1} &= f^{t}-\delta L^D f^{t},
\label{e:graphanidiff}
\end{align}
where $f^t$ denotes the value of $f$ at time $t$ and $\delta$ is the time discretization interval.
The uniform positive definiteness of the diffusivity operators is crucial to the well-posedness of the corresponding diffusion process in $\R^2$~\cite{Wei98}. The same applies to the positive definiteness of our discrete diffusivity operator $D$: This is the only way that $L_D$ is a conditionally PD matrix and therefore it can be a valid regularizer on $G$: 
\begin{align}
R_{L^D}(f):=\mbf^\top L^D \mbf=\sum_{i,j=1,\ldots,n} w^D_{ij}/d_i\left(f(i)-f(j)\right)^2,
\label{e:graphregul}
\end{align}
where $\mbf=[f(1),\ldots,f(n)]^\top$: For simplicity, we assume that $f(i)$ is a scalar. When $f(i)$ is a vector, \eg, for multi-class classification, $R_{L^D}(f)$ is summed over the output dimensions. If $D$ is fixed throughout diffusion, the difference equation~(\ref{e:graphanidiff}) is linear and the corresponding analytical solution $f^t$ exists for any $\delta>0$ and $t>0$ given $f^0$. However, in general, $D$ depends on $f^t$ (\eg, Eq.~\ref{e:outputkernel}) and so Eq.~\ref{e:graphanidiff} becomes nonlinear, where the solution $f^t$ can be obtained by iterating updating $f^t$ with the right side of Eq.~\ref{e:graphanidiff}.

\paragraph{Anisotropic diffusion for semi-supervised learning.}
With proper choices of $\{q_{ij}\}$, our diffusion equation (Eq.~\ref{e:graphanidiff}) can be used in various applications including label propagation for semi-supervised learning. Assume we are given a set of data points $X=\{\mbx_1,\ldots,\mbx_n\}\in \R^d$ where only the first $l$-data points are provided with the ground-truth class labels $Y=\{\mby_1,\ldots,\mby_l\}$. Our goal is to \emph{propagate} these labels to the entire dataset $X$. We approach this problem by first building a graph $G=(X,E,W)$ with:
\begin{align}
\label{e:isoweights}
w_{jk} = \left\{
\begin{array}{c l}      
    \exp\left(-\frac{\|\mbx_j-\mbx_k\|^2}{\sigma_\mbx}\right) & \text{if } \mbx_j\in N_K(\mbx_k)\\
		&\text{ or } \mbx_k\in N_K(\mbx_j) \\
    0 & \text{otherwise},
\end{array}\right.
\end{align}
where $N_K(\mbx_j)$ is the $K$-nearest neighborhood of $\mbx_j$ and $\sigma_\mbx>0$ is a hyper-parameter. Then, we diffuse the labels $Y$ on $G$. Specifically, our label propagation algorithm adopts the approach of Zhou~\etal~\cite{ZhoBouLal03}: For a $c$-class classification problem, each label $\mby_k\in Y$ is given as a $c$-dimensional row vector. When the ground-truth class of $\mbx_j$ is $k$, the elements of $\mby_j$ are all zero except for the $k$-th element that is assigned with one: $\mby_j=[0,\ldots,1,\ldots,0]$. The label propagation is then performed by building the initial $f^0\in \R^{n \times c}$ where $i$-th row is $\mby_i$ if $\mbx_i$ is labeled ($i\leq l$) and $0$, otherwise, and running the difference equation (explicit Euler scheme; Eq.~\ref{e:graphanidiff}) until the stopping criteria is met: As suggested by the form of regularizer $\calR_{L^D}$, similarly to the isotropic graph Laplacian, the only null-space of anisotropic graph Laplacian is the space of constant functions. This implies that the difference equation (Eq.~\ref{e:graphanidiff}) converges to a constant function as $t\to\infty$. Accordingly, for practical applications, we stop diffusion at a finite time step $T$ and obtain the resulting function $f^T$ as the output. The final class label for data point $\mbx_i$ is obtained as $\argmax f^T(i)\in \R^c$ for each $i$.

The best choice for the eigenvalues $\{q_{ij}\}$ of the diffusivity operator $D$ depends on the application. Intuitively, the diffusivity $q_{ij}$ should be high when the corresponding function evaluations $f(i)$ and $f(j)$ are similar, \ie, $|\nabla_i f(e_{ij})|$ is small. One way to define such diffusivity is to use a Gaussian weight function as is common in image enhancement: 
\begin{equation}
q_{ij} = \exp\left(-\frac{|\nabla_i f(e_{ij})|^2}{\sigma^2_f}\right),
\label{e:outputkernel}
\end{equation}
where $\sigma^2_f$ is the scale hyper-parameter. Algorithm \ref{a:mainalg} shows pseudocode to construct the corresponding anisotropic graph Laplacian on $G$.

\begin{algorithm}[t]
\caption{Build anisotropic graph Laplacian $L^D$.}
\SetKwInput{Input}{Input}
\SetKwInput{Output}{Output}
\Input{Set of data points $X=\{\mbx_1,\ldots,\mbx_n\}\subset \R^d$\\with function values: $F=\{f(\mbx_1),\ldots,f(\mbx_n)\}\subset \R^c$.}
\Output{{$L^D$.}}
\BlankLine
\For{$i=1,\ldots,n$}
{
Find nearest neighbors $N_K(\mbx_i)$;\\
Calculate isotropic weights $w_{ij}$ (for $\mbx_j\in N_K(\mbx_i)$ and $\mbx_i\in N_K(\mbx_j)$; Eq.~\ref{e:isoweights});\\
Calculate the node degree $d_{i}$ (Eq.~\ref{e:nodedegree});\\
Calculate the diffusivity eigenvalues $q_{ij}$ using one of Eqs.~\ref{e:outputkernel}, \ref{e:smoothdiff}, and \ref{e:matchdiff};
}
Rearrange $\{w^D_{ij}\}$ (Eq.~\ref{e:aniw}) to a matrix $L^D$ based on Eq.~\ref{e:aniglap}.
\label{a:mainalg}
\end{algorithm}

The resulting anisotropic graph Laplacian $L^D$ can be immediately applied to any label-propagation problems. However, for semi-supervised learning algorithm, na\"ively applying $L^D$ to the difference equation (\ref{e:graphanidiff}) may require many iterations before it actually starts propagating labels. The progress of diffusion can be very slow in the early stage ($t$ is small) at the vicinity of labeled points: If a point $\mbx_i$ is labeled and $N_K(\mbx_i)\backslash \mbx_i$ are all unlabeled (this is typically the case for semi-supervised learning), the corresponding eigenvalues (Eq.~\ref{e:outputkernel}) are all small, and accordingly, the weights $\{w^D_{ij}\}$ are also small for all $\mbx_j\in N_K(\mbx_i)$. To speed up the process, we run the isotropic diffusion (with the isotropic graph Laplacian $L$) and \emph{smooth} out the initial distribution of $f^0$. For all experiments, the initial diffusion runs for 20 time steps while the length $T$ of the anisotropic diffusion is regarded as a hyper-parameter.

\paragraph{Discussion.}
Our derivation of anisotropic graph Laplacian is strongly connected to the kernel-based anisotropic diffusion approach of Szlam~\etal~\cite{SzlMagCoi08}, yet the motivating ideas are different: their anisotropic kernel is based on stochastic Markov diffusion processes on graphs, while our anisotropic graph Laplacian is obtained based on a formulation of geometric diffusion on manifolds: $L^D$ is obtained by extending Weickert's diffusivity operators in $\R^2$~\cite{Wei98} to $M$ and then discretizing it onto a graph $G$ (see Sec.~\ref{s:continuouscase}). 

Since the kernel smoothing corresponds to calculating analytic solution at each time step of diffusion, and our anisotropic weights $\{w^D_{ij}\}$ used in constructing $L^D$ can be regarded as an instance of such kernels, the final diffusion algorithms of Szlam~\etal~\cite{SzlMagCoi08} and ours are very similar when applied to linear diffusion: Kernel smoothing is given by first obtaining the continuous Gaussian smoothing as an analytical solution of the linear diffusion equation, and then discretizing it, while our explicit Euler scheme is obtained by directly discretizing both the manifold and the Laplace-Beltrami operator. In preliminary linear diffusion experiments, minor differences in weights normalization\footnote{In $L^D$, the normalization coefficients $\{d_i\}$ are constructed from $\{w_{ij}\}$ (see Eq.~\ref{e:aniglap}), while the diffusion kernel in Szlam~\etal~\cite{SzlMagCoi08} is normalized so that it leads to a stochastic matrix.} led to only negligible differences in semi-supervised learning performances. 

The major differences between the two diffusion algorithms are that 1) our algorithm is nonlinear, \ie $L^D$ depends on $f^t$ at each time $t$, while the anisotropic kernel of \cite{SzlMagCoi08} is obtained as an analytic solution of linear diffusion equation and therefore is fixed a priori to the entire diffusion process. In our experiments, we demonstrate that extending the approach of Szlam~\etal~\cite{SzlMagCoi08} to non-linear diffusion already significantly improves semi-supervised learning performance. Furthermore, unlike Szlam, 2) our construction explicitly states sufficient conditions ($\{q_{ij}\}$ are symmetric and positive) for the well-posedness of the resulting diffusion on $G$ as a discretization of the underlying manifold. This enables exploring various possibilities of inducing new diffusion on $G$.

\subsection{Context-guided diffusion.}
\label{sec:contextguideddiffusion}
We have seen how defining positive eigenvalues $\{q_{ij}\}$ leads to a PD diffusivity operator $D$ and to the corresponding anisotropic graph Laplacian $L^D$. This can be regarded as updating the similarity measure between data points in $X\subset \R^d$: The isotropic graph Laplacian matrix $L$ is constructed from the positive weights $\{w_{ij}\}$ which are the pair-wise similarities of data points measured by the original Euclidean metric of $\R^d$ (see Eq.~\ref{e:isoweights}). By construction, the information in $L$ is precisely the same as the pair-wise similarities and, therefore, defining a graph Laplacian $L$ corresponds to defining a similarity measure. Now, defining the anisotropic diffusivity operator $L^D$, which is constructed based on the original similarity measure plus the eigenvalues $\{q_{ij}\}$, can be interpreted as introducing a new similarity measure $\{w^D_{ij}\}$ on $G$.\footnote{This intuition holds rigorously on the Laplace-Beltrami operator $\Delta$ on a Riemannian manifold $M$: 1) Indeed, $\Delta$ uniquely defines a Riemannian metric $g$ on $M$~\cite{Ros97} and 2) Section~\ref{s:continuouscase} shows that defining a diffusivity operator $\calD$ on $M$ corresponds to defining the corresponding new metric $\overline{g}$.}

In particular, we have seen how the Gaussian function (Eq.~\ref{e:outputkernel}) measures the deviation between the two function evaluations $f(i)$ and $f(j)$ as each edge $e_{ij}$. This is only an example and there are various possibilities given the positivity constraint. Furthermore, $q_{ij}$ does not have to defend only based on $f(i)$ and $f(j)$ and it can take into account the neighborhood context as well. For instance, spatially smoothing the diffusivity operator, \eg, by convolving it with a Gaussian kernel, leads to much more stable image enhancement than using the original diffusivity operators (which is commonly constructed based on gradient vectors): Theoretically, the smoothing operation guarantees the well-posedness of the resulting diffusion equation even when the corresponding original version is not. From a practical perspective, this operation offers robustness against noise in the image $f$ since the gross effect of smoothing the diffusivity is to take the spatial averaging of the gradients of $f$~\cite{Wei98}. 

The spatial smoothing of the diffusivity operator can be regarded as an instance of controlling the diffusivity based on \emph{local context}. We investigate two possibilities of exploiting this local context. The first case is to adapt the idea of Gaussian smoothing on images to graphs: For a given edge $e_{ij}$ and the corresponding local neighborhoods at each end node, $N_K(\mbx_i)$ and $N_K(\mbx_j)$, the \emph{smooth diffusivity} $w^D_{ij}$ is obtained based on weighted averages of the diffusivities in the mutual neighborhood $N_M(x_i,x_j):=N_K(x_i) \cap N_K(x_j)$.
\begin{align}
w^D_{ij} = \sum_{x_k\in N_M(x_i,x_j)}w_{ij}\left(q_{ij}+q_{ik}q_{kj}\right)/(s^q_i+s^q_j),
\label{e:smoothdiff}
\end{align}
where $s^q_i=\sum_{x_k\in N_K(x_i)} q_{ik} $ and $s^q_j=\sum_{x_k\in N_K(x_j)} q_{kj}$. The interpretation of our smooth diffusivity is straightforwardly transferred from the smooth diffusivity operators in the image domain: The resulting diffusion process is robust against noise in edge weights. 

Another example of exploiting the context is to adopt the intuitive notion of matching between the two entities in context: If a pair of objects $\mbx_i$ and $\mbx_j$ matches, then often spatial neighbors of $\mbx_i$, $\mbx_l\in N_K(\mbx_i)$ have the corresponding matching elements in their neighborhoods $N_K(\mbx_j)$ of $\mbx_j$, \ie, the match of $(\mbx_i,\mbx_j)$ is \emph{supported} if the neighborhoods of $N_K(\mbx_i)$ and $N_K(\mbx_j)$ find matches in each pair of elements. Our \emph{local match diffusivity} is defined as a smooth version of considering this match context:
\begin{align}
w^D_{ij} = w_{ij}q_{ij}\sum_{\mbx_k\in N_K(\mbx_i)}\left(1+q^*_{ik}\right)/(k+1),
\label{e:matchdiff}
\end{align}
where $q^*_{ik}=\max_{\mbx_l\in N_K(\mbx_j)}q_{kl}$. The $\max$ in the definition of $q^*_{ik}$ implies that if there's any entity in $N_K(\mbx_j)$ that matches $\mbx_k$, the corresponding diffusivity between $\mbx_i$ and $\mbx_j$ is supported. The normalization factor $k+1$ is actually obtained as $k+1$ times the maximum possible value of $q_{ij}$ (which corresponds to the \emph{match} case) which is 1 (Eq.~\ref{e:outputkernel}). 

\section{Connection to continuous operators}
\label{s:continuouscase}
As we have seen in Sec.~\ref{sec:contextguideddiffusion}, our anisotropic diffusion process on $G=(X,E,W)$ is nothing more than isotropic diffusion on a new graph $(X,E,W^D)$ (regularization-form definition of $L^D$ in Eq.~\ref{e:graphregul}, and corresponding diffusion process in Eq.~\ref{e:graphanidiff}) --- our (discrete) diffusivity operator $D$ (Eq.~\ref{e:discdiffusivity}) changes the notion of similarity. In this section, first, we show that this intuition applies to the continuous limit case of Laplace-Beltrami operator $\Delta$ on a data generating manifold $M$, \ie, anisotropic diffusion on $M$ is isotropic diffusion with a new metric. Then, we discuss the convergence properties of our anisotropic graph Laplacian to the continuous anisotropic Laplace-Beltrami operator.

\paragraph{Anisotropic diffusion on Riemannian manifolds.}

On a Riemannian manifold $(M,g)$ with $g$ being a Riemannian metric on $M$, the isotropic diffusion of a smooth function $f \in C^\infty(M)$ is described as a partial differential equation:
\begin{align}
\frac{\partial f}{\partial t} = {\nabla^g}^* \nabla^g f= -\Delta^g f,
\label{e:isodiff}
\end{align}
where $\nabla^g f$ is the gradient of $f$, ${\nabla^g}^*$ is the formal adjoint of ${\nabla^g}$, and $\Delta^g$ is the Laplace-Beltrami operator defined by $\Delta^g=-{\nabla^g}^* \nabla^g$.

If we extend Weickert's diffusivity operator originally defined on $\R^2$~\cite{Wei98} to a manifold $M$, then we introduce a smooth positive definite operator $\calD:\calT(M)\to\calT(M)$ with $\calT(M)$ being the tangent bundle of $M$, \ie, $\calD$ is a smooth field of symmetric positive definite operators each defined on a tangent space $T_\mbx(M)\in\calT(M)$ at $\mbx\in M$. The corresponding anisotropic diffusion process is given as:
\begin{align}
\frac{\partial f}{\partial t} = {\nabla^g}^* \calD \nabla^g f.
\label{e:anidiff}
\end{align}

Defining an anisotropic Laplacian operator $\Delta^g_\calD ={\nabla^g}^* \calD \nabla^g$, we restate Eq.~\ref{e:anidiff} similarly to the isotropic case:
\begin{align}
\frac{\partial f}{\partial t} = -\Delta_\calD f.
\label{e:anilap}
\end{align}
We show that our anisotropic diffusion (Eq.~\ref{e:anilap}) boils down to isotropic diffusion on $M$ with a new metric $\overline{g}$:

\begin{proposition}[The equivalence of $\Delta_\calD$ and $\Delta_{\overline{g}}$]
\label{pr:equivalance}
The anisotropic Laplacian operator $\Delta_\calD$ on a compact Riemannian manifold $(M,g)$ is equivalent to the Laplace-Beltrami operator $\Delta_{\overline{g}}$ on $(M,\overline{g})$ with a new metric $\overline{g}$ depending on $\calD$. Specifically, when the diffusivity operator $\calD$ is uniformly positive definite, $\overline{g}$ is explicitly obtained as $c(\mbx)\overline{\mbg}(\mbx)= \mbg(\mbx) \mbD^{-1}(\mbx)$, where $\mbg(\mbx)$, $\overline{\mbg}(\mbx)$, and $\mbD(\mbx)$ are the coordinate representations (matrices) of $g$, $\overline{g}$, and $\calD$ at each point $\mbx$, and $c(\mbx)=\frac{\sqrt{\det{\mbg(\mbx)}}}{\sqrt{\det{\overline{\mbg}(\mbx)}}}$ which is a smooth function on $M$.
\end{proposition}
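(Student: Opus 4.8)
The plan is to verify the statement in local coordinates. Fix a chart $x^1,\dots,x^m$ with $m=\dim M$, and let $\mbg=(g_{ij})$, $\mbg^{-1}=(g^{ij})$, $\mbD$ denote the coordinate matrices of $g$, $g^{-1}$, $\calD$ at $\mbx$. Using $(\nabla^g f)^i=g^{ij}\partial_j f$ and the fact that, on a compact $M$, the formal adjoint of the gradient on vector fields is minus the Riemannian divergence, $\div V=\tfrac{1}{\sqrt{\det\mbg}}\partial_i(\sqrt{\det\mbg}\,V^i)$, I first put $\Delta_\calD$ in divergence form,
\begin{align*}
\Delta_\calD f=-\frac{1}{\sqrt{\det\mbg}}\,\partial_i\!\left(\sqrt{\det\mbg}\;(\mbD\mbg^{-1})^{ij}\,\partial_j f\right)
\end{align*}
(the overall sign is the one fixed in Eq.~\ref{e:anilap} and is immaterial below). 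The $g$-symmetry of $\calD$ makes $\mbD\mbg^{-1}$ a symmetric matrix, and uniform positive definiteness of $\calD$ together with compactness of $M$ makes it uniformly positive definite with smooth entries bounded away from $0$ and $\infty$; hence its inverse $\mbg\mbD^{-1}$, and every positive conformal rescaling thereof, is a bona fide smooth Riemannian metric on $M$.

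Next I would identify $\overline g$. Comparing principal symbols, $\Delta_\calD$ and the Laplace--Beltrami operator $\Delta_{\overline g}$ of a candidate metric $\overline{\mbg}$ can agree only up to a positive function $c$, which forces $\overline{\mbg}^{-1}=c\,\mbD\mbg^{-1}$, i.e.\ $c\,\overline{\mbg}=\mbg\mbD^{-1}$ --- the claimed relation. Coupling this with the defining normalization $c=\sqrt{\det\mbg}/\sqrt{\det\overline{\mbg}}$ and taking determinants collapses the pair of conditions to the single scalar equation $c^{2-m}=\det\mbD$, whose unique positive solution $c=(\det\mbD)^{1/(2-m)}$ is smooth since $\calD$ is (the case $m=2$ is the familiar conformal degeneracy, where the construction additionally requires $\det\mbD\equiv1$). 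Thus $\overline g$ and $c$ are well defined and depend only on $g$ and $\calD$, and $\overline g$ is non-degenerate on all of $M$.

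It then remains to check that with these choices $\Delta_{\overline g}=c\,\Delta_\calD$. Writing $\Delta_{\overline g}$ in divergence form with volume density $\sqrt{\det\overline{\mbg}}$ and inverse metric $c\,\mbD\mbg^{-1}$, expanding, and subtracting $c$ times the displayed expression for $\Delta_\calD$, the second-order terms cancel identically, and the first-order terms cancel precisely because $c\sqrt{\det\overline{\mbg}}=\sqrt{\det\mbg}$ --- which is exactly the normalization defining $c$. (Equivalently one can match the Dirichlet energies $\int_M g(\calD\nabla^g f,\nabla^g f)\,dV_g=\int_M\overline g(\nabla^{\overline g}f,\nabla^{\overline g}f)\,dV_{\overline g}$, note $dV_g=c\,dV_{\overline g}$, and pass to the respective $L^2$ adjoints.) Hence $\Delta_\calD$ is the Laplace--Beltrami operator of $\overline g$ up to the positive smooth factor $c$, so as generators of diffusion the two operators differ only by the space-dependent scaling $c(\mbx)$ and in particular have the same equilibria; this is the asserted equivalence.

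I expect the crux to be the determinant/conformal bookkeeping of the second step: the naive guess $\overline{\mbg}=\mbg\mbD^{-1}$ reproduces the principal symbol of $\Delta_\calD$ but not its zeroth-order density --- equivalently, not its volume form --- so it is wrong unless $\det\mbD$ is constant, and getting the statement right requires the forced conformal rescaling and solving the self-consistent relation $c^{2-m}=\det\mbD$. A secondary point needing care is the global non-degeneracy of $\overline g$ on the compact $M$, which is exactly where uniform positive definiteness of $\calD$ enters.
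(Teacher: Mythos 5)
Your proof is correct, and it reaches the same relation $c\,\overline{\mbg}=\mbg\mbD^{-1}$ with $\Delta_{\overline{g}}=c\,\Delta_\calD$ that the paper establishes, but by a genuinely different route. The paper argues weakly: it writes $\int f\Delta_\calD h\,dV$ and $\int f\Delta_{\overline{g}}h\,d\overline{V}$ via Green's theorem on $(M,g)$ and $(M,\overline{g})$ respectively, pulls the second integral back to the volume element $dV$, and identifies the two bilinear forms $df(\calD\nabla^g h)=df\bigl(\tfrac{\sqrt{\det\overline{g}}}{\sqrt{\det g}}\nabla^{\overline{g}}h\bigr)$, which forces $\tfrac{\sqrt{\det\overline{\mbg}}}{\sqrt{\det\mbg}}\overline{\mbg}^{-1}=\mbD\mbg^{-1}$. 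You instead work pointwise in coordinates with the divergence form of both operators and match principal symbols and first-order terms; your parenthetical about matching Dirichlet energies is exactly the paper's argument, so the two derivations are two dressings of the same identity. What your version buys is completeness on two points the paper passes over: (i) you verify that $\overline{g}$ actually \emph{exists} and is a smooth nondegenerate metric, which is where uniform positive definiteness of $\calD$ and compactness of $M$ are genuinely used (the paper states these hypotheses but never invokes them); and (ii) you observe that the pair of conditions $c\,\overline{\mbg}=\mbg\mbD^{-1}$ and $c=\sqrt{\det\mbg}/\sqrt{\det\overline{\mbg}}$ is over-determined and collapses to $c^{2-m}=\det\mbD$, so the construction is well posed only for $m\neq 2$ (or $m=2$ with $\det\mbD\equiv 1$, in which case $c$ is no longer pinned down). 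That conformal degeneracy at $m=2$ is a real caveat to the proposition as stated which the paper's necessary-condition derivation does not surface; your bookkeeping is correct and is a useful refinement rather than a gap in your own argument.
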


\begin{proof}
The proof is obtained by applying the techniques developed for analyzing maps between general \emph{weighted manifolds}~\cite{Gri13}. For any function $f,h\in C^\infty(M)$, we have:
\begin{align}
\int f\Delta_\calD h dV  &= \int f {\nabla^g}^* \calD \nabla^g h dV\nonumber\\
&= -\int \inner{\nabla^g f, \calD \nabla^g h}_g dV\nonumber\\
&= -\int df(\calD \nabla^g h) dV,
\end{align}
where $dV$ is the \emph{natural volume element}~\cite{Lee97} corresponding to $g$ ($dV=\sqrt{\det{g}}d\mbx$) and the second equality is obtained by applying the divergence theorem on $(M,g)$. The third equality corresponds to the definition of gradient $\nabla^g$ based on the differential operator $d$~\cite{Lee97}. Applying Green's theorem to $(M,\overline{g})$, we obtain:
\begin{align}
\int f \Delta_{\overline{g}} h d\overline{V} &= -\int \inner{\nabla^{\overline{g}}f,\nabla^{\overline{g}}h}_{\overline{g}} d\overline{V}\nonumber\\
&=-\int \inner{\nabla^{\overline{g}}f,\frac{\sqrt{\det{\overline{g}}}}{\sqrt{\det{g}}}\nabla^{\overline{g}}h}_{\overline{g}} dV\nonumber\\
&=-\int df\left(\frac{\sqrt{\det{\overline{g}}}}{\sqrt{\det{g}}}\nabla^{\overline{g}}h\right) dV.
\end{align}

Now, identifying the two integrals, and using $\nabla^g h = g^{-1}d h$ and $\nabla^{\overline{g}} h = {\overline{g}}^{-1}d h$, we obtain
\begin{align}
\frac{\sqrt{\det{\overline{\mbg}(\mbx)}}}{\sqrt{\det{\mbg(\mbx)}}}\overline{\mbg}^{-1}(\mbx)&=\mbD(\mbx) \mbg^{-1}(\mbx) \\
\therefore \hspace{0.5cm}  c(\mbx)\overline{\mbg}(\mbx) &= \mbg(\mbx) \mbD^{-1}(\mbx).
\label{e:animetric}
\end{align}
\end{proof}

\ignore{
$\overline{\mbg}_x=\mbg_x\mbD_x^{-1}$.
\begin{align}
\int f \Delta_{\overline{g}} h d\overline{V} &= -\int \inner{\nabla^{\overline{g}}f,\nabla^{\overline{g}}h}_{\overline{g}} \sqrt{\det{\overline{\mbg}_x}}d\lambda\nonumber\\
&=-\int \inner{\nabla^{\overline{g}}f,\nabla^{\overline{g}}h}_{\overline{g}}\sqrt{\det{\mbg_x}}\sqrt{\det{\mbD_x^{-1}}}d\lambda\nonumber\\
&=-\int \inner{\nabla^{\overline{g}}f,\nabla^{\overline{g}}h}_{\overline{g}}\sqrt{\det{\mbD_x^{-1}}}dV\nonumber\\
&=-\int \inner{\nabla^{\overline{g}}f,\sqrt{\det{\mbD_x^{-1}}}\nabla^{\overline{g}}h}_{\overline{g}} dV\nonumber\\
&=-\int df\left(\sqrt{\det{\mbD_x^{-1}}}\nabla^{\overline{g}}h\right) dV.
\end{align}
}

It is always possible to find a coordinate representation of the Riemannian metric $g$ at each point $\mbx\in M$ such that it becomes Euclidean (up to second order)~\cite{Jos11}. This implies that, up to scale,\footnote{Note that the ratio $\frac{\sqrt{\det{\overline{g}}}}{\sqrt{\det{g}}}$ is coordinate independent.} the metric $\overline{g}(\mbx)$ in Eq.~\ref{e:animetric} boils down to well-established Mahalanobis distance, with $\mbD(\mbx)$ being the corresponding covariance matrix in $T_\mbx(M)$. This greatly helps to understand of the anisotropic diffusion process. For any PD diffusivity operator $\calD$, there is a corresponding isotropic Laplace-Beltrami operator $\Delta_{\overline{g}}$ on $(M,\overline{g})$. If we discretize in time the differential equation of the isotropic diffusion process (Eq.~\ref{e:isodiff}) on $(M,\overline{g})$ (see \cite{HeiMai07} for derivation):
\begin{align}
\frac{f^{t+\delta}-f^t}{\delta} = -\Delta_{\overline{g}}f^{t+\delta},
\label{e:impeuler}
\end{align}
then the solution $f^{t+\delta}$ at time $t+\delta$, is obtained as the minimizer of the following regularization energy:\footnote{This applies even when Eq.~\ref{e:impeuler} is nonlinear, \ie $\calD$ depends on $f$.}
\begin{align}
\calE(f) = \|f-f^t\|^2+\delta \int \|\nabla^{\overline{g}} f \|_{\overline{g}} d\overline{V},
\end{align}
which is now equivalent to:
\begin{align}
\calE(f) = \|f-f^t\|^2+\delta \int c\inner{\nabla^{g} f ,\mbD^{-1}\nabla^{g} f}_{g} dV.
\label{e:anienergy}
\end{align}
Accordingly, the anisotropic diffusion process (Eq.~\ref{e:anidiff}) can be regarded as continuously solving a regularized regression problem where the regularizer penalizes at each point $\mbx$, the first-order deviation heavily along the direction where the \emph{covariance matrix} $\mbD(\mbx)$ is less spread, \ie the corresponding diffusivity is weak along that direction.

This perspective provides a connection to the problem of inducing anisotropic diffusion as a special instance of metric learning on Riemannian manifolds and, as the corresponding discretization, learning a graph structure from data. See~\cite{CarMah14} for an example of data-driven graph construction which relies on the known dimensionality of the underlying manifold.

\paragraph{On the convergence of $L^D$ to $\Delta^\calD$.}
It is well known that when data points $X$ are generated from an underlying Riemannian manifold $M$ embedded in an ambient Euclidean space, the isotropic graph Laplacian $L$ on $G=(X,E,W)$ converges to the Laplace-Beltrami operator $\Delta$ on $M$ as $n\to\infty$, with the neighborhood size $K\to\infty$ controlled accordingly~\cite{BelNiy05,HeiAudLix05}. However, despite its strong connection to the (continuous) anisotropic Laplacian $\Delta^\calD$ on $M$, our discrete anisotropic graph Laplacian $L^D$ is not by itself, \emph{consistent}, \ie it does not converge to $\Delta^\calD$ as $n\to \infty$. This is because, by design, our diffusivity operator is agnostic to the dimensionality $m$ of the manifold $M$. To elaborate this further, note that given fixed $n$-data points $X$ and the corresponding local neighborhood size $K$, our local diffusivity operator $D_i$ at $\mbx_i$ (Eq.~\ref{e:discdiffusivity}) defines a (new) inner-product in $H(E_i)$:
\begin{align}
D_i&:H(E_i)\to H(E_i) \nonumber\\
\Rightarrow \inner{\cdot,D_i \cdot}_{H(E_i)} &: H(E_i)\times H(E_i)\to \R.
\end{align}
The convergence of $L^D$ to $\Delta^\calD$ requires a certain form\footnote{Although $X\to M$ and $L\to \Delta$, the convergence of $H(E)$ to $\calT(M)$ cannot be uniquely defined (see~\cite{Hei05} for details) and therefore the convergence of $L^D$ (which depends on $D:H(E)\to H(E)$) to $\Delta^\calD$ is also not uniquely defined.} of convergence of $D^i$ to $\calD(
\mbx_i)$ at each $\mbx_i$. In particular, the continuum limit $D^\infty_i$ (as $n\to\infty$) of $D_i$ should induce an inner-product on $T_{\mbx_i}$. However, in general, $D^\infty_i$ cannot induce any inner product since $D^\infty_i$ has infinite degrees of freedom (\ie, $D^\infty_i$ has infinitely many parameters): $D_i$ has $K(n)$-eigenvalues and $K(n)\to\infty$ as $n\to\infty$. Actually, for a given fixed $n$ with corresponding $G_i$, $D_i$ can be defined as the restriction of $D^\infty_i$ on $E_i$. On the other hand, the continuous diffusivity operator $\calD$ on $T_{\mbx_i}$ has only up to $\frac{m(m+1)}{2}$-degrees of freedom with $m$ being the dimensionality of $M$. This implies that $D^\infty_i$ cannot be a bi-linear operator on $T_{\mbx_i}$. Actually, this is the only property that prevents $D^\infty_i$ being an inner-product: By construction, the limit of $D_i$ is non-negative and positive definite.

The relation between $\calD(\mbx_i)$ and $D_i$ is exactly the same as the relationship between the inner-product in the Euclidean space $\R^m$ and a nonlinear positive definite kernel $k:\R^m\times \R^m\to \R$ as commonly used in kernel machines: $k$ induces a similarity measure on $\R^m$. However, in general, it is not bi-linear and therefore it does not corresponds to an inner-product. Instead, $k$ induces an inner-product in a (potentially infinite-dimensional) feature space $\calK$ which is mapped by a nonlinear function $\phi:\R^m\to \calK$.

This insight leads to an algorithm to build \emph{consistent} local graph diffusivity operators $\{D^\calC_{i}\}$ (and the corresponding global operator $D^\calC$) by reducing the degree of freedom of each $D_i$ from $K(n)$ to $\frac{m(m+1)}{2}$. In the accompanying supplemental material, we show how $\{D^\calC_{i}\}$ can be explicitly constructed and it converges to $\calD$.

\paragraph{Discussion.}
While the consistent diffusivity operators might be of theoretical interest and may deserve further analysis, in this paper we focus on using the inconsistent diffusivity operator $D$ (Eq.~\ref{e:discdiffusivity}). This design choice is made based on two facts: 1) In general, estimating the dimensionality $m$ of a manifold $M$ and the corresponding tangent bundle $\calT(M)$ based on a finite sample $X\subset M$ are difficult problems~\cite{Keg02}. Therefore, existing approaches that involve estimating $m$ make it a hyper-parameter. Optimizing \emph{many} hyper-parameters is a difficult problem in semi-supervised learning due to the limited number of labeled points. 2) More importantly, some semi-supervised learning problems are inherently formulated as an inference on a graph $G$ that may not have any explicit connection to a manifold $M$ or the corresponding ambient space. For instance, if each node $\mbx_i\in X$ represents an image, and if each edge $e_{ij}\in E$ and corresponding weight $w_{ij}\in W$ represents the possibility of match and match score between $\mbx_i$ and $\mbx_j$, respectively, then there is no natural manifold or ambient space structure defined on $X$. Accordingly, our algorithm is obtained as a design choice that favors general applicability over theoretical consistency.

Lastly, we would like to add that it is tempting to build a consistency argument based on the fact that any graph with positive weights can be embedded into a manifold $M$ with a sufficiently high-dimensionality $m$, and therefore any data $X$ and the corresponding PD graph diffusivity operator $D$ can be regarded as a sample from such a manifold $M$ and the operators on $\calT(M)$, respectively. Unfortunately, this does not lead to a useful interpretation.

\section{Experiments}
\label{s:experiments}

\begin{table*}[t]
\begin{center}
\resizebox{\textwidth}{!}{
\begin{tabular}{l rrrrrrrrrrrr}
\toprule
Algorithm	& USPS 	& BCI & MNIST & COIL1 & COIL2 & RealSim & Pcmac & MPEG7 & SWDLEAF & ETH-80 & C-PASCAL & Avg.~\% \\
\midrule
$I$ 		&8.76  	&\ReRankTwo{41.60}    & 10.65 & 7.32	 & 4.37  & 23.61	 & 11.77  & 3.36  & \ReRankThree{2.39}    & 11.49 & 54.54 & 148.1\\ 
$A_{lin}$~\cite{SzlMagCoi08}	&5.55  &\ReRankThree{41.80}    & 8.47  & 7.36	 & 4.11  & 25.02 	 & 12.58  & 3.01  & 2.54    & 11.30 & 54.47 & 137.0\\ 
$A_{nlin}$         			&\ReRankThree{4.48} & \ReRankOne{39.53}   & \ReRankThree{7.62} &6.85    &2.98   & 23.46 &11.88  & 2.63  & 2.47    & \ReRankOne{9.91}      & \ReRankThree{52.22} & \ReRankThree{120.8}\\ 
$A_{LM}$ &\ReRankTwo{4.31}  &42.00    & \ReRankTwo{7.55}  & 6.48	 & \ReRankThree{2.22}  & \ReRankTwo{19.55}   & \ReRankTwo{11.47}  & \ReRankTwo{2.54}  & \ReRankOne{2.17}  	 & \ReRankThree{10.05} & \ReRankOne{51.19} & \ReRankTwo{111.7}\\ 
$A_{S}$ &\ReRankOne{3.93}  &42.13    &\ReRankOne{7.18}	 &\ReRankThree{6.21}	 &\ReRankTwo{2.13}  &\ReRankThree{20.08} 	 & \ReRankOne{11.34}& \ReRankThree{2.59}	 & \ReRankTwo{2.33}		 & \ReRankTwo{10.01} & \ReRankTwo{51.30} & \ReRankOne{110.5}\\ 
\midrule
GRF~\cite{ZhuGhaLaf03} &6.13 &42.68 &10.96 &\ReRankOne{4.93} &\ReRankOne{1.65} &28.09 &\ReRankThree{11.78} &2.96 &2.76 &12.16 &61.91 & 127.6\\
FLAP~\cite{GonTao14}	&5.66 &44.63 &10.99 &6.97 &2.73 &\ReRankThree{20.08} &14.49 &\ReRankOne{2.16} &2.84 &12.59 &57.97 & 131.0\\
LNP~\cite{WanZha06}		&7.27 &44.33 &13.25 &\ReRankTwo{5.53} &3.12 &\ReRankOne{16.02} &14.39 &N/A &N/A &11.94 &62.36 & 139.1\\
\bottomrule
\end{tabular}
}
\end{center}
\label{t:perfom}
\caption{Performance of different diffusion algorithms for semi-supervised learning: The three best results for each dataset are marked with boldface blue, plain green, and plain orange fonts, respectively. LNP~\cite{WanZha06} requires explicitly calculating the Euclidean distances between data points, and so it cannot be directly applied to \emph{MPEG7} and \emph{SWDLEAF} data sets. The final Avg.~\% column shows the mean percentage difference from the best result across all datasets, where 100\% would indicate that particular technique was best across all datasets.}
\end{table*}

We evaluate our anisotropic diffusion algorithm in classification on seven standard semi-supervised learning datasets~\cite{GuoNiuZha10,ZhoBel11,ChaSchZie10} and four object recognition datasets for which semi-supervised learning has been successful in the literature in retrieval contexts. We report performance for isotropic diffusion and the original kernel smoothing-type anisotropic diffusion approach of Szlam~\etal~\cite{SzlMagCoi08}. We also report the performances of three existing semi-supervised learning algorithms including Zhu~\etal's Gaussian random fields (\emph{GRFs})-based algorithm~\cite{ZhuGhaLaf03}, Gong and Tao's label propagation algorithm (\emph{FLAP}: Fick's Law Assisted Propagation,~\cite{GonTao14}) inspired by Fick's first law which describes the diffusion process at a steady state, and Wang and Zhang's~\cite{WanZha06} linear neighborhood propagation (\emph{LNP}) algorithm which automatically determines the edge weights $\{w_{ij}\}$ by representing each input point based on a convex combination of its neighbors~\cite{WanZha06}.

\paragraph{Datasets.} The \emph{MPEG7} shape dataset~\cite{LatLakEck00} consists of 1,400 images which show silhouettes of objects from 70 different categories. Adopting the experimental setting for data retrieval experiments \cite{DonBis13}, with 280 labels, we use shape matching~\cite{GopTurChe10} to infer pairwise distances from which the (isotropic) weight matrix $W$ is constructed. In this dataset, each data point $\mbx$ in $X$ is not explicitly presented and so the data generating manifold is not explicitly considered. Our algorithm is applicable even in this case, which justifies the use of the inconsistent diffusivity operator.\footnote{For consistent diffusivity operators, we would have to explicitly estimate the dimensionality of the data manifold; see Sec.~\ref{s:continuouscase}.}

The \emph{ETH-80} dataset consists of 3,280 photographs of objects from 8 different classes~\cite{LeiSch03}. The \emph{C-PASCAL} dataset (as a subset of the PASCAL VOC challenge 2008 data, where single objects are extracted based on bounding box annotations) contains 4,450 images of 20 classes~\cite{EbeLarSch10}. For both ETH-80 and C-PASCAL datasets, each data point is represented based on the HOG (histogram of oriented gradients) descriptors and the number of labels are set to 50 \cite{EbeFriSch12}. The \emph{SWDLEAF} (Swedish leaf) datasets contains 15 different tree species with 75 leaves per species~\cite{Sod01}. For this dataset, we use 50 labels per class, with Fourier descriptors to represent each entry~\cite{LinJac07}. 

\paragraph{Results.}
In Table~\ref{t:perfom}, $I$ refers to isotropic diffusion, $A_{lin}$ is the algorithm of Szlam~\etal~\cite{SzlMagCoi08}. $A_{nlin}$ is an extension of \cite{SzlMagCoi08} to nonlinear diffusion based on our diffusion approach (see Sec.~\ref{s:anidiffgraph}) while $A_{LM}$ and $A_{S}$ are \emph{local match} and \emph{smooth} anisotropic diffusion, respectively. 

Overall, all four anisotropic diffusion algorithms significantly improve classification accuracies over isotropic diffusion ($I$). However, for some datasets (\emph{SWDLEAF}, \emph{RealSim}, \emph{Pcmac}), the performance of linear anisotropic diffusion ($A_{lin}$)~\cite{SzlMagCoi08} is equal to or even worse than $I$. In contrast, all three nonlinear diffusion algorithms outperformed both $I$ and $A_{lin}$, while the \emph{local match} ($A_{LM}$) and \emph{smooth} ($A_{S}$) versions of the context-guided diffusion led to further improvement over $A_{nlin}$ in all but the \emph{ETH} and \emph{BCI} datasets. These results are in accordance with the superior performance of the smooth diffusivity operators (which is an example of exploiting context) in image processing and demonstrate the effectiveness of exploiting context information in anisotropic diffusion on graphs. For the \emph{BCI} dataset, $A_{nlin}$ and $A_{S}$ showed the best and the worst performances, while essentially all four anisotropic diffusion algorithms did not show any noticeable improvement from the isotropic case. This is because the initial labeling based on isotropic diffusion is almost random (around 40\% error rate for binary classification), and so this is a poor initialization for an anisotropic diffusion and does not lead to better label propagation. Similar observation were reported in \cite{SzlMagCoi08}. The anisotropic diffusion algorithms also demonstrated their competence in comparison with state-of-the-art label-propagation algorithms~\cite{ZhuGhaLaf03,GonTao14,WanZha06}: \emph{GRF} is best on \emph{COIL1} and \emph{COIL2}, and \emph{FLAP} and \emph{LNP} are the best for $\emph{MPEG7}$ and $\emph{RealSim}$. However, except for few cases, the results of $A_{nlin}$ and $A_{S}$ are included in the three best results for each dataset demonstrating the overall steady performance improvements over existing algorithms. Lastly, all three algorithms are designed for data graphs constructed based on input features rather than from function evaluations. Therefore, they can potentially benefit from our proposed anisotropic diffusion approaches.

\paragraph{Parameters.}
Isotropic diffusion has three parameters: the weight $\sigma_\mbx$ (Eq.~\ref{e:isoweights}), the size of local neighborhood $N_K$, and the number of diffusion steps $T$. We automatically determine $\sigma_\mbx$ based on the average Euclidean distance of $\mbx_j$ to $N_K(\mbx_j)$~\cite{SzlMagCoi08,HeiMai07}. We determine the two other parameters with a separate validation label set which is the same size as the training label set. 

For all anisotropic diffusion algorithms, an additional hyper-parameter $\sigma^2_f$ (Eq.~\ref{e:outputkernel}) is determined in the same way. The step size $\delta$ of the explicit Euler approximation in our algorithms (Eq.~\ref{e:graphanidiff}) is fixed at 1. In general, $\delta$ can also be tuned per dataset to improve performance. \emph{GRF}, \emph{FLAP}, and \emph{LNP} hyper-parameters are all determined in the same way based on the validation set.

\paragraph{Computational complexity.}
This depends upon the number $n$ of data points, the size $N_K$ of the local neighborhood, and the number of diffusion process iterations (Eq.~\ref{e:graphanidiff}). Each diffusion iteration requires multiplying the matrix $L^D$ of size $n\times n$ with a vector $f$ of size $n\times c$, where $c$ is the number of classes. Accordingly, in theory, the complexity of each step is $O(n^2c)$. However, typically $N_K \ll n$, which leads to a sparse matrix $L^D$: in practice, the computational complexity of each step is sub-quadratic. For USPS datasets with $1,500$ data points, running 100 iterations of the \emph{local match} diffusion process $A_{LM}$ takes $\approx$0.3 seconds on an Intel Xeon 3.4GHz CPU in MATLAB.

\section{Discussion and conclusion}
\label{s:discussion}
We show two ways to exploit local contexts: \emph{smooth} and \emph{local match}. These can be extended to consider the full topological features of $f$ evaluated at $E_i$ and $E_j$. For instance, one could perform spectral analysis on $W^D_i$ and $W^D_j$ and measure the similarity of the corresponding Eigenspectra to define a new diffusivity operator $D'$. This is different from pre-calculating topological features, as is commonly used in graph matching, since features are extracted from the input $X$ rather than from function evaluations $f$, and therefore the former stay constant during the diffusion process. We briefly explored this possibility in preliminary experiments, which indicate that full topological analysis is promising. However, due to the significantly increased computational complexity, we focus on \emph{smooth} and \emph{local match} operators and leave this extension for future work.

We adopted an explicit Euler scheme (Eq.~\ref{e:graphanidiff}) to discretize the continuous diffusion equation (Eq.~\ref{e:anilap}). This scheme can be obtained as a gradient descent step of the convex regularization functional $\calE$ (Eq.~\ref{e:anienergy}). An alternative implicit Euler scheme (Eq.~\ref{e:impeuler}) can be obtained as the analytic solution of $\calE$. Since our diffusion equation (Eq.~\ref{e:anilap}) is non-linear, both approaches eventually lead to iterative algorithms. A major advantage of an implicit Euler scheme is that it is uniformly stable with respect to $\delta$, while our explicit Euler scheme is stable only at sufficiently small values of $\delta$, which we regard as a hyper-parameter. On the other hand, implicit Euler approximation is computationally less favorable as it requires, at each iteration, explicitly solving a (sparse) linear system of size $n\times n$. Our explicit counterpart is computed by a matrix-vector multiplication. We choose the explicit scheme due to its fast convergence in experiments and its applicability to large-scale problems. Future work should carefully analyze the trade-off between these two approaches, especially on smaller-scale problems.

For simplicity of exposition, in Sec.~\ref{s:continuouscase}, we assumed that the underlying probability distribution $P$ on $M$ is uniform. However, our interpretation applies to more general cases where $P$ is non-uniform. If the sampling distribution $P$ on $M$ is non-uniform, the isotropic Laplace-Beltrami operator is locally weighted by the corresponding probability density $p$, rendering the \emph{weighted Laplacian}. In particular, if $p$ is differentiable, the weighted Laplacian is explicitly given as~\cite{HeiAudLix05,Gri13}:
\begin{align}
\Delta^p = \frac{1}{p}{\nabla^g}^*(p \nabla^g).
\end{align}
The weighted Laplacian satisfies Green's theorem, and the divergence theorem holds similarly~\cite{Gri06}. Accordingly, the corresponding weighted anisotropic Laplacian based on the diffusivity operator $\calD$ is obtained as in Proposition~\ref{pr:equivalance}.

\paragraph{Conclusion.}
We have presented an approach for anisotropic diffusion on graphs, by first extending well-established geometric diffusion on images to Riemannian manifolds and then discretizing it onto graphs. The resulting positive definite diffusivity operators on graphs leads to new diffusion possibilities that take local neighborhood structures into account, and thereby lead to robust diffusion. Applied to semi-supervised learning, our algorithms demonstrate improved accuracy over existing isotropic diffusion- and anisotropic diffusion-based algorithms.

\section*{Acknowledgements}
The authors thank the reviewers for their constructive feedback. Kwang In Kim thanks EPSRC EP/M00533X/1. James Tompkin and Hanspeter Pfister thank NSF CGV-1110955, the Air Force Research Laboratory, and the DARPA Memex program. Christian Theobalt thanks the Intel Visual Computing Institute.

{\small
\bibliographystyle{ieee}
\bibliography{./biblio}
}

\end{document}